\DeclareMathOperator*{\argmax}{arg\,max}
\theoremstyle{plain}
\newtheorem{theorem}{Theorem}[section]
\newtheorem{proposition}[theorem]{Proposition}
\theoremstyle{definition}
\theoremstyle{remark}
\icmltitlerunning{Return Capping: Sample-Efficient CVaR Policy Gradient Optimisation}
\begin{document}

\twocolumn[
\icmltitle{Return Capping: Sample-Efficient CVaR Policy Gradient Optimisation}

% It is OKAY to include author information, even for blind
% submissions: the style file will automatically remove it for you
% unless you've provided the [accepted] option to the icml2025
% package.

% List of affiliations: The first argument should be a (short)
% identifier you will use later to specify author affiliations
% Academic affiliations should list Department, University, City, Region, Country
% Industry affiliations should list Company, City, Region, Country

% You can specify symbols, otherwise they are numbered in order.
% Ideally, you should not use this facility. Affiliations will be numbered
% in order of appearance and this is the preferred way.

\begin{icmlauthorlist}
\icmlauthor{Harry Mead}{ox}
\icmlauthor{Clarissa Costen}{ox}
\icmlauthor{Bruno Lacerda}{ox}
\icmlauthor{Nick Hawes}{ox}
\end{icmlauthorlist}

\icmlaffiliation{ox}{University of Oxford}

\icmlcorrespondingauthor{Harry Mead}{hmead@robots.ox.ac.uk}

% You may provide any keywords that you
% find helpful for describing your paper; these are used to populate
% the "keywords" metadata in the PDF but will not be shown in the document
\icmlkeywords{Machine Learning, ICML}

\vskip 0.3in
]

% this must go after the closing bracket ] following \twocolumn[ ...

% This command actually creates the footnote in the first column
% listing the affiliations and the copyright notice.
% The command takes one argument, which is text to display at the start of the footnote.
% The \icmlEqualContribution command is standard text for equal contribution.
% Remove it (just {}) if you do not need this facility.

\printAffiliationsAndNotice{}  % leave blank if no need to mention equal contribution
%\printAffiliationsAndNotice{\icmlEqualContribution} % otherwise use the standard text.

\begin{abstract}
When optimising for conditional value at risk (CVaR) using policy gradients (PG), current methods rely on discarding a large proportion of trajectories, resulting in poor sample efficiency.  We propose a reformulation of the CVaR optimisation problem by capping the total return of trajectories used in training, rather than simply discarding them, and show that this is equivalent to the original problem if the cap is set appropriately. We show, with empirical results in an number of environments, that this reformulation of the problem results in consistently improved performance compared to baselines. We have made all our code available here: \url{https://github.com/HarryMJMead/cvar-return-capping}.

\end{abstract}

\section{Introduction}

\begin{figure*}[ht]
  \centering
  \subfigure[Standard CVaR policy gradient]{
    \includegraphics[width=0.4\textwidth]{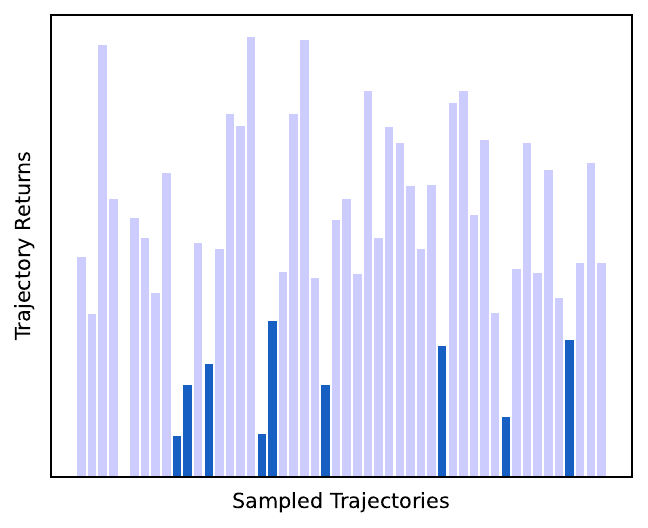}
    \label{fig:CVaR_Trajectories}
  }
  \subfigure[Return Capping]{
    \includegraphics[width=0.4\textwidth]{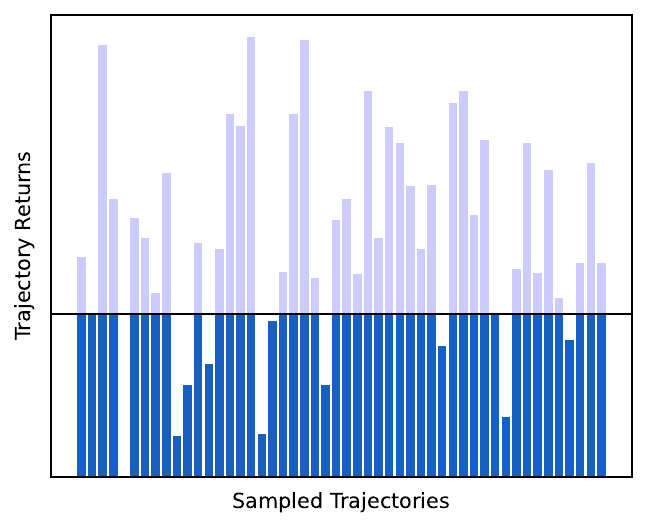}
    \label{fig:Return_Capped_Trajectories}
  }
  \caption{Comparison of trajectory usage of an example batch of sampled trajectories between standard CVaR Policy Gradient methods and Return Capping}
  \label{fig:Trajectory_Comparison}
\end{figure*}

In applications that are safety critical, or where catastrophic failure is a possibility, it can be beneficial to minimise the worst case outcomes of decisions, rather than simply performing well on average. Risk-averse reinforcement learning (RL) addresses this problem by maximising the performance of a policy using a \textit{risk metric} as the objective, rather than expected value. We focus here on \emph{conditional value at risk} (CVaR). CVaR represents an intuitive, coherent~\cite{artzner1999coherent} risk metric that assesses the expected value of the worst $\alpha$ proportion of runs. Note that in this work we focus on \emph{static} CVaR, where static CVaR refers to the CVaR of full episode returns, due its ease of interpretability for setting a level of risk aversion. 

A common approach to maximising CVaR in RL is to use policy gradient (PG) methods~\cite{tamar2015sampling}. In the case of CVaR, PG methods sample a set of trajectories, then discard all but the bottom $\alpha$-proportion based on total returns, and then aim to maximise the expected return of this remaining subset of trajectories. Whilst this is sufficient for some applications, it presents a number of issues. Primarily, it results in \textit{very poor sample efficiency}, especially when $\alpha$ is low, as a large proportion of sampled trajectories are discarded. Figure \ref{fig:CVaR_Trajectories} illustrates how existing CVaR PG methods are inefficient in terms of trajectory usage. In addition, since it is the highest return samples that are discarded, \textit{the policy is unable to learn from the best performing trajectories}, which can present further issues for policy training. 
\\
In order to mitigate these issues, we propose \emph{Return Capping}. Rather than discarding trajectories, we propose that all trajectories are used in training, but the return of those above a certain threshold is truncated. We prove in Section \ref{sec:Return_Capping} that if this threshold, or cap, is set correctly, optimising CVaR under Return Capping is equivalent to optimising CVaR without the cap. Since Return Capping allows all trajectories to be used, sample efficiency is greatly improved, and learning performance improvements from high-performing trajectories is possible. Our main contributions are:
\begin{itemize}
    \item Proposing the Return Capping reformulation of the CVaR optimisation objective, with proof that the two are equivalent if the cap is set correctly.
    \item Presenting a method for approximating this target cap.
    \item Performing empirical evaluations of this method in established risk-aware benchmarks, where Return Capping has better performance than state-of-the-art baselines.
\end{itemize}
We also introduce a new CVaR policy gradient baseline based on PPO, which we show performs better than standard baselines in some environments.

\section{Related Work}

Training for risk-averse behaviour has been explored in many existing works. Direct reward shaping has been used to explicitly discourage risky behaviours~\cite{wu2023shaping}. Work in robust adversarial RL~\cite{pinto2017adversarial, pan2019risk} involves simultaneously training both the primary agent and an adversary that perturbs environment dynamics in order to train more robust policies. However, in both of these approaches, it is challenging to quantify the level of risk-aversion that results from either the adversary or reward shaping. 

A method of quantifying risk aversion is to use risk metrics. Possible metrics include mean variance~\cite{sobel1982variance, sato2001td, la2013actor, prashanth2016variance} or Value at Risk (VaR)~\cite{filar1995percentile, chow2018risk}, however these metrics are not \textit{coherent}~\cite{artzner1999coherent}, which can present issues when using them for decision making. Examples of coherent risk metrics include entropic value at risk~\cite{ahmadi2012entropic}, the Wang risk measure~\cite{wang2000class}, or, the focus of this work, conditional value at risk (CVaR)~\cite{rockafellar2000optimization}. Optimising CVaR using policy gradients has been explored in a number of works~\cite{tamar2015sampling, tang2019worst, keramati2020optimisitc, schneider2024quadrupedal, kim2024risk}. However, much of this work is based Distributional-RL (DRL)~\cite{bellemare2017distributional}, and generally focuses on dynamic CVaR~\cite{dabney2018quantile, ma2020dsac, lim2022distributional, schneider2024quadrupedal}. Unlike static CVaR, dynamic CVaR is Markovian~\cite{ruszczynski2010dynamic}, and thus can be locally optimised in a policy. However, it is difficult to interpret and can lead to overly conservative or overly optimistic behaviours~\cite{lim2022distributional}.~\cite{lim2022distributional} does suggest a modification to DRL to optimise for static CVaR, but results in~\cite{luo2024mixing} show poor performance in relation to policy gradient methods. 

The most similar works to ours, with a focus on efficient optimisation of static CVaR are~\cite{greenberg2022efficient} and~\cite{luo2024mixing}.~\cite{greenberg2022efficient} shows significant sample-efficiency improvements over baselines but requires the environment to be formulated as a Context-MDP~\cite{hallak2015contextual}, where the context captures part, or all, of the environment's randomness. To account for vanishing gradients,~\cite{greenberg2022efficient} proposes Soft Risk, where the policy is initially  optimised for a risk-neutral objective, before annealing the risk-sensitivity to the target value. Alternatively,~\cite{luo2024mixing} presents a policy-mixing approach, where the final policy is a mixture of a risk-neutral and risk-averse policy. However, the benefits of this method are greatest only in environments where there are large regions where the optimal risk-neutral and risk-averse policy are the same, which cannot generally be assumed to be the case.

An alternate approach for risk-averse RL is Constrained RL. Constrained RL presents a problem formulation for settings where distinct reward and cost functions exist, where the objective is to maximise reward subject to constraints on the cost \cite{stooke2020responsive, yang2021wcsac, wu2024off}. However, explicitly separate reward and cost functions limits the applicability of these methods to certain environments, and our work instead focuses on risk over a single optimisation metric.

\section{Background}

\subsection{Conditional Value at Risk (CVaR)}

Let $Z$ be a bounded random variable with a cumulative distribution function $F(z) = P(Z \leq z)$. In the case of this paper, the random variable $Z$ will be the return of a trajectory obtained from an agent acting in an environment. For $Z$, the \emph{value at risk} (VaR) at confidence level $\alpha$ is defined as
\begin{equation}
\text{VaR}_\alpha (Z) = \min\{z|F(z) \geq \alpha\}.
\end{equation}
The \emph{conditional value at risk} (CVaR) at confidence level $\alpha$ is then defined as 
\begin{equation}
\text{CVaR}_\alpha (Z) = \frac{1}{\alpha} \int_0^\alpha \text{VaR}_x (Z) \, dx.
\end{equation}
Alternatively, if $Z$ is continuous, the CVaR can be expressed as
\begin{equation}
\text{CVaR}_\alpha (Z) = \mathbb{E}[Z|Z \leq \text{VaR}_\alpha (Z)].
\end{equation}
Therefore, the $\text{CVaR}_\alpha (Z)$ can be interpreted as the expected value of the variable $Z$ in the bottom $\alpha$ proportion of the tail. In our case, the $\text{CVaR}_\alpha (R)$ is the expected total return $R$ of the worst performing $\alpha$ proportion of trajectories.

\subsection{CVaR Reinforcement Learning}

Consider a Markov Decision Process (MDP) defined by $(S, A, P, R)$, corresponding to the state space, the action space, the state transition probabilities and the state transition rewards respectively. Optimal static CVaR policies may be non-Markovian, as static CVaR is dependant on the trajectory history. To account for this, we instead solve for an augmented MDP with a state space $S^+ = (S, \sum r)$, where $\sum r$ is the sum of all rewards up to the current time-step in the trajectory~\cite{lim2022distributional}. From this MDP, a policy $\pi_\theta$, parametrised by $\theta$, maps a state input to a distribution over possible actions. With this policy, we can sample trajectories $\tau = \{(s_i, a_i, r_i)\}^T_{t=0}$, and compute the total discounted return $R(\tau) = \sum_{t=0}^T \gamma^t r_t$ using a discount factor $\gamma$. We denote the expected return of a policy as
\begin{equation}
J (\pi_\theta) = \mathbb{E}_{\tau \sim \pi_\theta} [R(\tau)].
\end{equation}
Similarly, we denote the policy CVaR at confidence level $\alpha$ as
\begin{equation}
J_\alpha (\pi_\theta) = \mathbb{E}_{\tau \sim \pi_\theta} [R(\tau) | R(\tau) \leq \text{VaR}_\alpha(R(\tau)].
\end{equation}
Given a set of trajectories $\left\lbrace \tau_i \right\rbrace^N_{i=0}$, the CVaR policy gradient (CVaR-PG) method aims to maximise  $J_\alpha (\pi_\theta)$ by performing gradient ascent with respect to the policy parameters $\theta$. The CVaR gradient estimation is given by~\cite{tamar2015sampling} 
\begin{multline}
\label{eqn:CVaR_Grad}
\nabla_\theta \hat{J}_\alpha \left(\left\lbrace \tau_i \right\rbrace^N_{n=0}; \pi_\theta\right) = \\ \frac{1}{\alpha N} \sum_{i=0}^N \Biggl( \mathbf{1}_{R(\tau_i) \leq \text{VaR}_\alpha} (R(\tau_i) \\ - \text{VaR}_\alpha) \sum_{t=0}^T \nabla_\theta \log \pi_\theta (a_{i,t}; s_{i,t}) \Biggr).
\end{multline}
In this formulation, the indicator function means that only samples from the bottom $\alpha$-proportion of runs are used to compute the gradient.

\subsection{Limitations of CVaR-PG}

The primary issue with the standard method for CVaR-PG optimisation is the poor sample efficiency, especially in cases where the target $\alpha$ is small. For an example of where $\alpha=0.05$, $95\%$ of all trajectories will be discarded, and so a much greater number samples are required to get an equivalent batch size to when optimising for expected value. Another potential issue with standard CVaR-PG optimisation is \emph{blindness to success}~\cite{greenberg2022efficient}. Given a batch of sampled trajectories, the standard CVaR policy gradient does not differentiate between high trajectory returns due to effective actions being sampled or environment stochasticity benefitting the agent. However it is likely, especially early in training, that a proportion of these high returns are due to better policy actions sampled in those specific trajectories. Due to the nature of CVaR-PG, if these trajectories fall outside of the bottom-$\alpha$ proportion of runs, the policy training is effectively blind to these successes. A related issue is a vanishing gradient when the tail of the distribution of trajectory returns is sufficiently flat. In environments with discrete rewards, it is possible that the bottom $\alpha$ proportion of rewards are all equally bad, resulting in the CVaR-PG gradient vanishing entirely. Similarly to the blindness to success, gradient vanishing is most likely to be an issue early in training. The solution presented in~\cite{greenberg2022efficient} for both issues is to initially optimise for a risk-neutral policy and then slowly anneal $\alpha$ to the target value, but this can present issues when the risk-neutral and the CVaR optimal policies are sufficiently distinct as policy may become stuck in the local optimal of the risk-neutral policy.

\section{Return Capping}

\begin{algorithm}[tb]
   \caption{Return Capping}
   \label{alg:Return_Capping}
\begin{algorithmic}
   \STATE {\bfseries Input:} risk level $\alpha$, batch size $N$, training steps $M$, minimum cap $C^M$, cap step size $\eta$
   \STATE {\bfseries Initialise:} policy $\pi_{\theta_1}$, value function $V_{\theta_2}$, $C \gets C^M$
   \FOR{$m \in 1:M$}
   \STATE // Sample Trajectories
   \STATE $\left\{ \tau_i\right\}^N_{i=1} \gets \text{sample\_trajectories}(\pi_{\theta_1})$
   \STATE // Cap Trajectories
   \STATE $\left\{ \tau^c_i\right\}^N_{i=1} \gets \text{cap\_trajectories}(C, \left\{ \tau_i\right\}^N_{i=1})$
   \STATE Update $\theta_1, \theta_2$ using PPO$\left(\pi_{\theta_1}, V_{\theta_2}, \left\{ \tau^c_i\right\}^N_{i=1}  \right)$
   \STATE VaR $\gets$ calculate\_VaR$(\alpha, \left\{ \tau_i\right\}^N_{i=1})$
   \STATE $C \gets C + \eta(\text{VaR} - C)$
   \STATE $C \gets \max(C, C^M)$
   \ENDFOR
\end{algorithmic}
\end{algorithm}

\label{sec:Return_Capping}
Rather than discard all trajectories outside of the bottom proportion, instead we propose that we keep all trajectories but truncate all returns above a cap $C$. Algorithm \ref{alg:Return_Capping} outlines the implementation of this method, and Figure \ref{fig:Trajectory_Comparison} illustrates the difference in how the trajectories are handled compared to standard CVaR-PG. Below, we show that, if this cap $C$ is set correctly, the optimal policy for this problem is equivalent to the optimal $\text{CVaR}_\alpha$ policy. 
\begin{proposition}
    Suppose $\pi^*$ is the optimal $\text{CVaR}_\alpha$ policy, and $\text{VaR}_\alpha(\pi^*)$ is the VaR of this policy. Any policy that satisfies $\pi = \argmax_\pi \mathbb{E}_{\tau \sim \pi} [\min(R(\tau), C)]$ where $C=\text{VaR}_\alpha(\pi^*$) will also be $\text{CVaR}_\alpha$ optimal.
\end{proposition}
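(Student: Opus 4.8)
The plan is to reduce the statement to the Rockafellar--Uryasev variational formula for CVaR. First I would record the elementary identity $\min(z,C) = C - (C-z)^+$, which gives, for any policy $\pi$ and any fixed cap $C$,
\[
\mathbb{E}_{\tau\sim\pi}[\min(R(\tau),C)] \;=\; C \;-\; \mathbb{E}_{\tau\sim\pi}\big[(C-R(\tau))^+\big].
\]
Consequently, for fixed $C$, maximising the capped objective over $\pi$ is equivalent to maximising
\[
g(\pi,C) \;:=\; C \;-\; \tfrac{1}{\alpha}\,\mathbb{E}_{\tau\sim\pi}\big[(C-R(\tau))^+\big],
\]
because $g(\pi,C)$ is an increasing affine function of $\mathbb{E}_{\tau\sim\pi}[\min(R(\tau),C)]$ (the two differ only by the additive constant $C$ and the positive multiplicative constant $1/\alpha$, neither of which depends on $\pi$). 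So the set of maximisers of the capped objective coincides with the set of maximisers of $g(\cdot,C)$.

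Next I would invoke the Rockafellar--Uryasev representation~\cite{rockafellar2000optimization}: for every policy $\pi$, $J_\alpha(\pi) = \max_{C\in\mathbb{R}} g(\pi,C)$, and the inner maximum is attained at $C=\text{VaR}_\alpha(R(\tau))$ under $\pi$. I would either cite this directly or include the short derivation — $g(\pi,\cdot)$ is concave, its (sub)gradient condition at $C$ is $\alpha \in [\,P(R<C),P(R\le C)\,]$, which holds at $C=\text{VaR}_\alpha$ by the definition of VaR, and evaluating $g$ there reproduces the average of the lower-$\alpha$ tail, i.e. $J_\alpha(\pi)$.

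Given these two facts the argument is a four-step chain. Let $\pi^*$ be $\text{CVaR}_\alpha$-optimal, put $v^*=\text{VaR}_\alpha(\pi^*)$ and $C=v^*$, and let $\pi$ be any maximiser of the capped objective, hence of $g(\cdot,v^*)$. Then
\[
J_\alpha(\pi) \;=\; \max_{C'} g(\pi,C') \;\ge\; g(\pi,v^*) \;\ge\; g(\pi^*,v^*) \;=\; \max_{C'} g(\pi^*,C') \;=\; J_\alpha(\pi^*),
\]
where the first inequality simply drops the maximisation over $C'$, the second uses that $\pi$ maximises $g(\cdot,v^*)$, and the last two equalities are Rockafellar--Uryasev applied to $\pi^*$ together with the fact that its inner maximiser is exactly $v^*$. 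Since $\pi^*$ is CVaR-optimal we also have $J_\alpha(\pi)\le J_\alpha(\pi^*)$, so $J_\alpha(\pi)=J_\alpha(\pi^*)$ and $\pi$ is $\text{CVaR}_\alpha$-optimal.

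The main obstacle is not the chain of inequalities but making precise \emph{why} the cap must equal $\text{VaR}_\alpha(\pi^*)$: this is exactly the value of $C$ for which $g(\pi^*,C)=J_\alpha(\pi^*)$, and for any other $C$ one only gets $g(\pi^*,C)\le J_\alpha(\pi^*)$, which breaks the final equality. I would also be careful about the non-continuous case (discrete returns): the Rockafellar--Uryasev identity still holds, but the set of inner maximisers may be an interval of which $\text{VaR}_\alpha$ is one endpoint, so it suffices that $\text{VaR}_\alpha(\pi^*)$ is \emph{a} maximiser of $g(\pi^*,\cdot)$. Finally I would double-check the sign convention, since here $\text{CVaR}_\alpha$ is the lower-tail (worst $\alpha$-proportion) quantity rather than the upper-tail version that sometimes appears in the literature.
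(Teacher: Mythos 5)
Your proof is correct, but it takes a genuinely different route from the paper's. The paper works directly with the quantile representation $J^C(\pi;C)=\int_0^1\min(\mathrm{VaR}_x(\pi),C)\,dx$, splits the integral at $\alpha$, bounds the two pieces by $\alpha J_\alpha(\pi^*)$ and $(1-\alpha)\mathrm{VaR}_\alpha(\pi^*)$ respectively, shows that $\pi^*$ attains both bounds simultaneously, and concludes that any maximiser of the capped objective must attain the left bound and hence be $\mathrm{CVaR}_\alpha$-optimal. You instead rewrite $\mathbb{E}[\min(R,C)]=C-\mathbb{E}[(C-R)^+]$ and note that, for fixed $C$, the capped objective and the Rockafellar--Uryasev functional $g(\pi,C)=C-\tfrac{1}{\alpha}\mathbb{E}[(C-R)^+]$ have the same maximisers (they differ by the positive factor $1/\alpha$ and the $\pi$-independent constant $C(1-1/\alpha)$; your description of the constants is slightly off but immaterial), then run the sandwich $J_\alpha(\pi)\ge g(\pi,v^*)\ge g(\pi^*,v^*)=J_\alpha(\pi^*)$ using the variational identity $J_\alpha(\pi)=\max_{C'}g(\pi,C')$ with the inner maximum attained at the $\alpha$-quantile. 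What your route buys: it is shorter, it makes transparent why the cap must be exactly $\mathrm{VaR}_\alpha(\pi^*)$ (it is the inner maximiser for $\pi^*$), and it handles atomic or discrete return distributions cleanly, since you only need $\mathrm{VaR}_\alpha(\pi^*)$ to be \emph{some} maximiser of $g(\pi^*,\cdot)$, a point you address explicitly. What the paper's route buys: it is self-contained and elementary, requiring no appeal to the Rockafellar--Uryasev theorem, and the explicit split at $\alpha$ mirrors the algorithmic intuition that below-cap quantiles carry the CVaR signal while above-cap quantiles are flattened. Both arguments ultimately establish $J_\alpha(\pi)\ge J_\alpha(\pi^*)$ for every maximiser $\pi$ of the capped objective, which together with the optimality of $\pi^*$ gives the claim.
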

\begin{proof}
    Below is an expression for the expectation of the capped trajectories.
    \begin{equation}
    \label{eqn:Return_Capping_Objective}
    J^C (\pi_\theta; C) = \mathbb{E}_{\tau \sim \pi_\theta} [\min(R(\tau), C)].
    \end{equation}
    If we denote $\text{VaR}_x (\pi)$ as the VaR of confidence level $x$ of the returns of trajectories sampled using the policy $\pi$, we can rewrite this equation as
    \begin{equation}
    J^C (\pi_\theta; C) = \int_0^1 \min(\text{VaR}_x (\pi_\theta), C) \, dx.
    \end{equation}
    Suppose now we set $C = \text{VaR}_\alpha(\pi^*)$, and split the integral into two regions separated by $\alpha$
    \begin{multline}
    J^C (\pi_\theta; \text{VaR}_\alpha(\pi^*)) \\ = \int_0^\alpha \min(\text{VaR}_x (\pi_\theta), \text{VaR}_\alpha(\pi^*)) \, dx  \\ + \int_\alpha^1 \min(\text{VaR}_x (\pi_\theta), \text{VaR}_\alpha(\pi^*)) \, dx.
    \end{multline}
    Consider the first half of the equation.
    \begin{multline}
    \int_0^\alpha \min(\text{VaR}_x (\pi_\theta), \text{VaR}_\alpha(\pi^*)) \, dx \\ \leq \int_0^\alpha \text{VaR}_x (\pi_\theta) \, dx  =
    \alpha J_a(\pi_\theta).
    \end{multline}
    Therefore, the upper bound of the first half of the integral is
    \begin{equation}
    \label{eqn:Upper_Bound_Left}
    \max_\pi \alpha J_\alpha (\pi) = \alpha J_\alpha (\pi^*).
    \end{equation}
    As $\text{VaR}_x(\pi)$ is monotonically increasing with respect to $x$, when $x \leq \alpha$, $\text{VaR}_x(\pi^*) \leq \text{VaR}_\alpha(\pi^*)$, and in the left integral $x \leq \alpha$ for all $x$. Therefore, the optimal $\text{CVaR}_\alpha$ policy $\pi^*$ results in
    \begin{multline}
    \int_0^\alpha \min(\text{VaR}_x (\pi^*), \text{VaR}_\alpha(\pi^*)) \, dx \\ = \int_0^\alpha \text{VaR}_x (\pi^*) \, dx =
    \alpha J_a(\pi^*)
    \end{multline}
    and so the left integral is equal to its upper bound. Now considering the right integral
    \begin{multline}
    \label{eqn:Upper_Bound_Right}
    \int_\alpha^1 \min(\text{VaR}_x (\pi_\theta), \text{VaR}_\alpha(\pi^*)) \, dx \\ \leq
    \int_\alpha^1 \text{VaR}_\alpha(\pi^*) \, dx = 
    (1 - \alpha) \text{VaR}_\alpha(\pi^*)
    \end{multline}
    and so now we have $(1 - \alpha) \text{VaR}_\alpha(\pi^*)$ as an upper bound on the right integral. For the right integral, $x \geq \alpha$ for all $x$, so $\text{VaR}_x(\pi^*) \geq \text{VaR}_\alpha(\pi^*)$ for all $x$. Therefore, the optimal $\text{CVaR}_\alpha$ policy $\pi^*$ results in
    \begin{multline}
    \int_\alpha^1 \min(\text{VaR}_x (\pi^*), \text{VaR}_\alpha(\pi^*)) \, dx \\ =
    \int_\alpha^1 \text{VaR}_\alpha(\pi^*) \, dx = 
    (1 - \alpha) \text{VaR}_\alpha(\pi^*)
    \end{multline}
    and so is also equal the right integral upper bound. Therefore,
    \begin{equation}
    \max J^C (\pi; \text{VaR}_\alpha(\pi^*)) = J^C (\pi^*; \text{VaR}_\alpha(\pi^*))
    \end{equation}
    and so the optimal $\text{CVaR}_\alpha$ policy is also the optimal policy to maximise the expected return of the truncated trajectories. As this policy satisfies the upper bound defined by Equations \ref{eqn:Upper_Bound_Left}, and \ref{eqn:Upper_Bound_Right}, all optimal policies must satisfy this upper bound. Therefore, from Equation \ref{eqn:Upper_Bound_Left}, the optimal policy to maximise the expected return on the truncated trajectories must also be the optimal $\text{CVaR}_\alpha$ policy, when the cap is set to $\text{VaR}_\alpha(\pi^*)$.
\end{proof}

\subsection{Cap Approximation}

Whilst we have shown equivalence between the two problems when the cap is set to $\text{VaR}_\alpha(\pi^*)$, in most environments, this value will be unknown and must be approximated. In order to approximate this value, we use the set of trajectories we have previously sampled. 

For the trajectories $\left[ \left\lbrace \tau_i \right\rbrace^N_{i=0} \right]_k$ sampled using the policy $\pi_{\theta_k}$, where $k$ denotes the number of policy gradient updates, a possible approximation is $C = \text{VaR}_\alpha(\pi_{\theta_{k-1}})$. However, this is likely to be a high-variance approximation of $\text{VaR}_\alpha(\pi^*)$, especially with smaller batch sizes. In order to reduce variance, we used the following update rule to update the approximation after each policy update step.
\begin{equation}
C_k = C_{k-1} + \eta \left(\text{VaR}_\alpha(\pi_{\theta_{k-1}}) - C_{k-1}\right)
\end{equation}
where $\eta$ parametrises the size of the cap update step. 

In order to reduce the effect of vanishing gradients, we also introduced a minimum cap value $C^M$. In many environments, it is relatively simple to determine the $\text{CVaR}_\alpha$ of an overly conservative policy that does nothing, e.g. betting 0 at each turn in the betting game outlined below in Section \ref{sec:Betting_Game}. We suggest that any environment where finding a risk-averse policy is compelling should have a policy that performs better than this highly conservative baseline. By setting the minimum cap value above the performance of this baseline, we reduce the likelihood of vanishing gradients. However, it is necessary to ensure that $C^M$ is a valid minimum, where we define a valid minimum cap as $C^M \leq \text{VaR}_\alpha(\pi^*)$. Given,
\begin{equation}
\label{eqn:CVaR_less_than_VaR}
\text{CVaR}_\alpha(\pi) \leq \text{VaR}_\alpha(\pi), \quad \forall \pi
\end{equation} 
and
\begin{equation}
\label{eqn:Cap_Valid}
    \text{CVaR}_\alpha(\pi) \leq \text{CVaR}_\alpha(\pi^*), \quad \forall \pi,
\end{equation}
the $\text{CVaR}_\alpha$ of \textit{any} policy is a valid minimum cap. Therefore, we know that the $\text{CVaR}_\alpha$ of the conservative baseline is valid. In order to reduce the likelihood of vanishing gradients, we assume a cap marginally higher is also valid, but further increasing of the minimum increases the probability of the cap being invalid. Alternative minimum caps that would guarantee validity include the $\text{CVaR}_\alpha$ of a random policy, or the $\text{CVaR}_\alpha$ of the optimal risk-neutral policy, although the latter may be non-trivial to determine.

\subsection{Reward Adjustment}

Given the objective function in Equation \ref{eqn:Return_Capping_Objective}, the environment rewards needs to be adjusted so as sum to the capped reward. Given reward $r_t$ at timestep $t$, let
\begin{equation}
R_t = \sum_{i=0}^t r_i
\end{equation}

To calculate the adjusted reward $r_t^a$
\begin{equation}
r_t^a = \min(R_t, C) - \min(R_{t-1}, C)
\end{equation}
Using this adjusted reward, we can use reward-to-go to compute the policy gradients, rather that just using total episode return.

\section{Experiments}

\subsection{Baselines}

Although~\cite{tamar2015sampling} derives Equation \ref{eqn:CVaR_Grad} as the policy gradient for CVaR optimisation, empirical evidence suggests this baseline performs poorly in practice~\cite{greenberg2022efficient, luo2024mixing}. This outcome is not unexpected, given it both does not incorporate a state-value function baseline~\cite{sutton1998introduction}, and also bases the gradient update on total episode return, rather than return-to-go, making credit assignment more difficult. In order to present more representative baselines, we choose to compare return capping to the following:
\begin{itemize}
    \item \emph{Expected Value}: PPO~\cite{PPO} maximising the expected value of return;
    \item \emph{CVaR-PG}: Unmodified CVaR-PG from Equation \ref{eqn:CVaR_Grad}; and
%    \item \emph{CVaR-V}: CVaR-PG with a value function baseline
    \item \emph{CVaR-PPO}: A CVaR-PG implementation using PPO (Outlined in Appendix \ref{app:CVaR_PPO}).
\end{itemize}

When evaluating Return Capping against these baselines, our focus is on CVaR in relation to total number of agent steps in the environment. As the CVaR-PG methods involve discarding a proportion of trajectories, either each gradient update step will have a smaller batch size compared to Return Capping, or the total number of gradient update steps will be reduced. In order to fairly assess the performance of the CVaR-PG baselines, we will compare Return Capping the better performing of these two options. We have included details on all hyperparameters used for baselines and Return Capping in the Appendix.

\subsection{Environments}

We use a range of environments to compare the reward-capping algorithm to the baseline methods. In selecting illustrative environments, we focus on environments where the CVaR optimal policy is significantly different from the optimal expected value policy. We have outlined these environments below, with further detail in the Appendix.

\begin{figure}[ht]
\vskip 0.2in
\begin{center}
\centerline{\includegraphics[width=0.6\columnwidth]{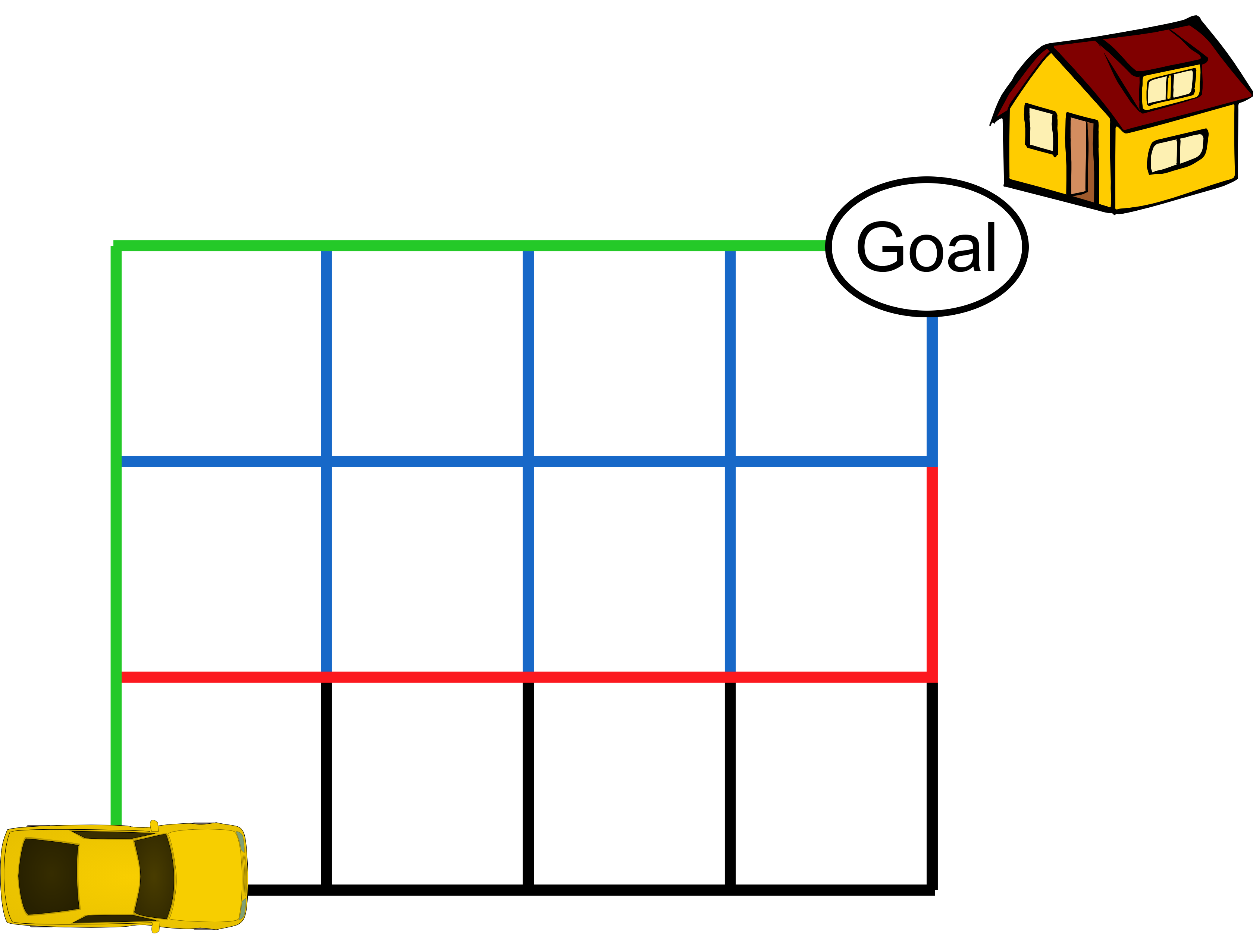}}
\caption{Autonomous Vehicle navigation environment from~\cite{rigter2021bamdp}. Colours indicate the road type as follows - black: \emph{highway}, red: \emph{main road}, blue: \emph{street}, green: \emph{lane}}
\label{fig:AV_Image}
\end{center}
\vskip -0.2in
\end{figure}

\begin{figure}[ht]
\vskip 0.2in
\begin{center}
\centerline{\includegraphics[width=0.6\columnwidth]{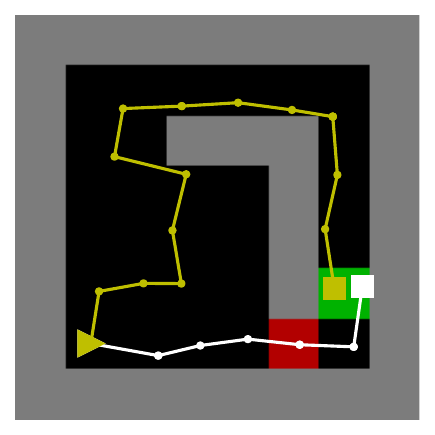}}
\caption{Continuous Guarded Maze Environment. Two trajectories shown here - one taking the shortest path passing through the guard whereas the other takes a longer path that avoids the guard}
\label{fig:Guarded_Maze_Image}
\end{center}
\vskip -0.2in
\end{figure}

\begin{figure*}[ht]
    \centering
    \subfigure[Baselines]{
        \includegraphics[width=0.35\textwidth]{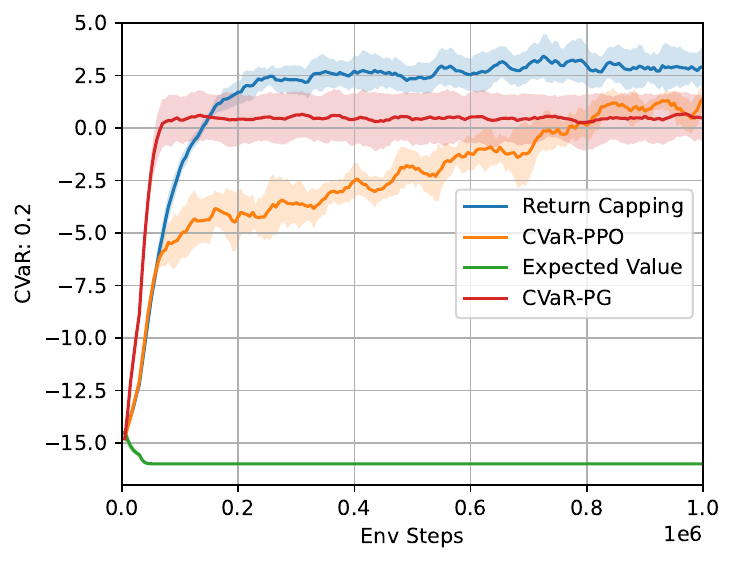}
        \label{fig:Betting_Game}
    }
    \subfigure[Minimum Return Cap]{
        \includegraphics[width=0.35\textwidth]{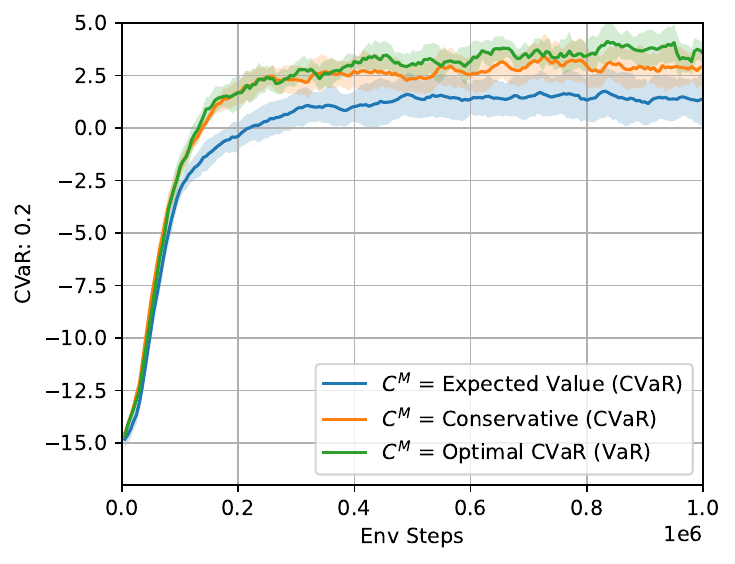}
        \label{fig:Betting_Game_Cap_Compare}
    }
    \caption{Betting Game $\text{CVaR}_{\alpha=0.2}$ performance. Comparing to Baselines and showing effect of different minimum return caps}
    \label{fig:Betting_Game_Plots}
\end{figure*}

\begin{figure*}[ht]
    \centering
    \subfigure[AV: Baselines]{
        \includegraphics[width=0.28\textwidth]{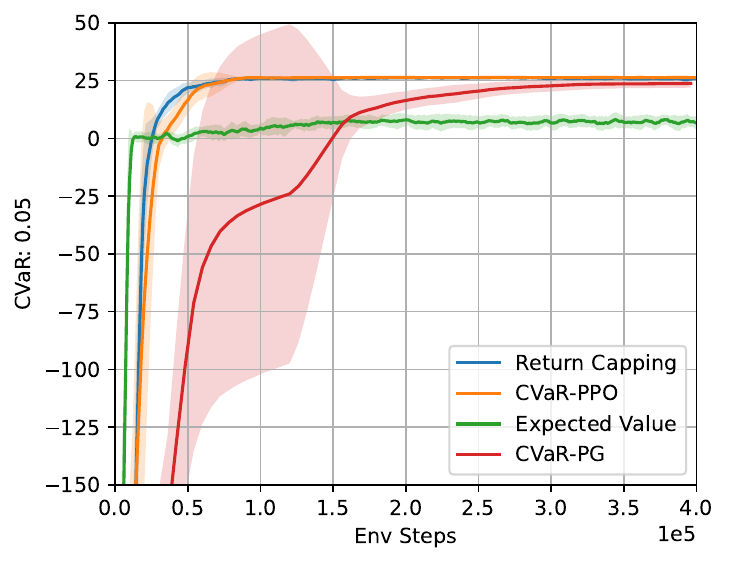}
        \label{fig:Autonomous_Vehicle_No_Outliers}
    }
    \subfigure[C-GM: Baselines]{
        \includegraphics[width=0.28\textwidth]{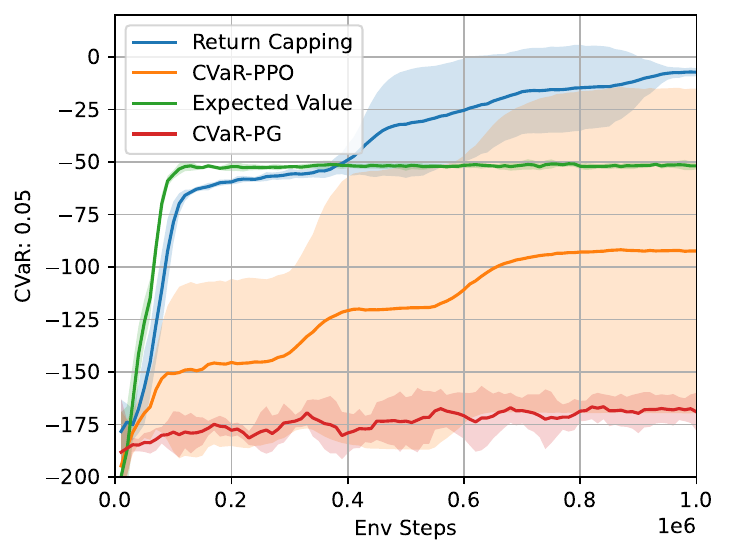}
        \label{fig:Guarded_Maze_CeSoR}
    }
    \subfigure[D-GM: Baselines]{
        \includegraphics[width=0.28\textwidth]{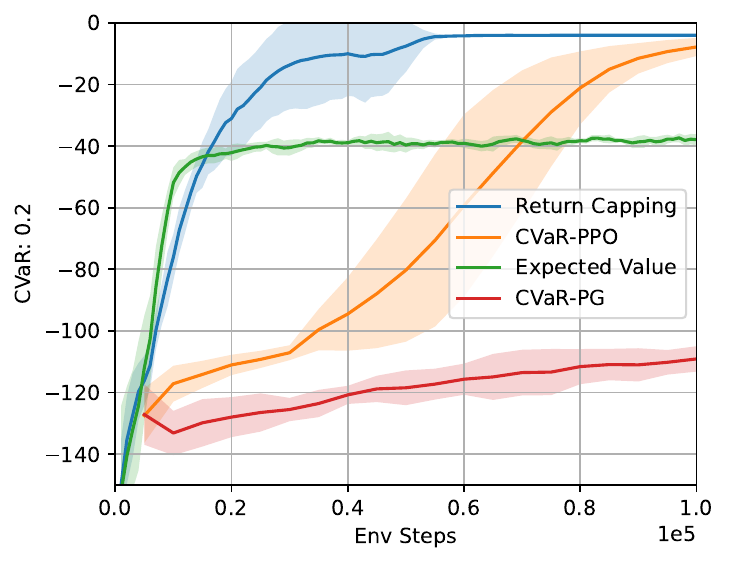}
        \label{fig:Guarded_Maze}
    }
    
    \subfigure[AV: Minimum Return Cap]{
        \includegraphics[width=0.28\textwidth]{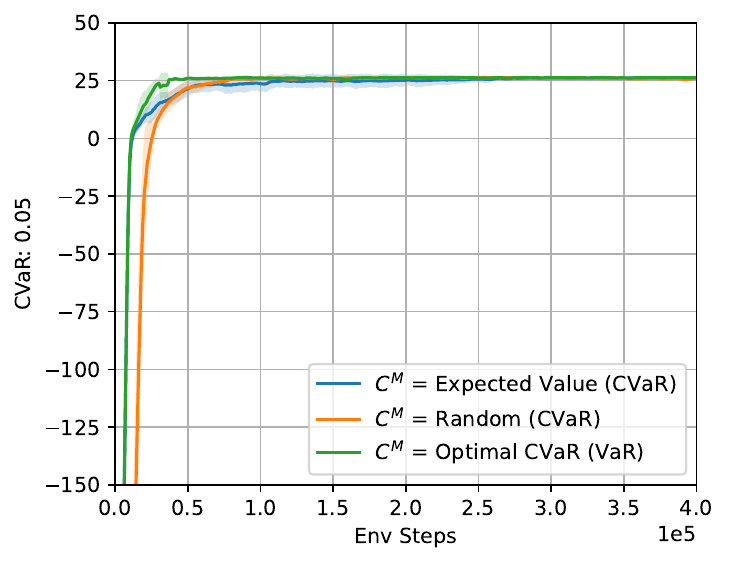}
        \label{fig:Autonomous_Vehicle_Cap_Compare}
    }
    \subfigure[C-GM: Minimum Return Cap]{
        \includegraphics[width=0.28\textwidth]{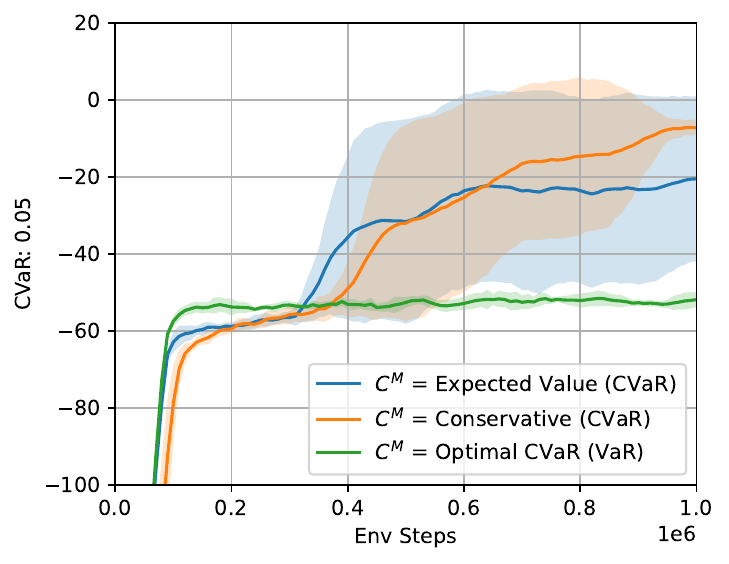}
        \label{fig:Guarded_Maze_CeSoR_Cap_Compare}
    }
    \subfigure[D-GM: Minimum Return Cap]{
        \includegraphics[width=0.28\textwidth]{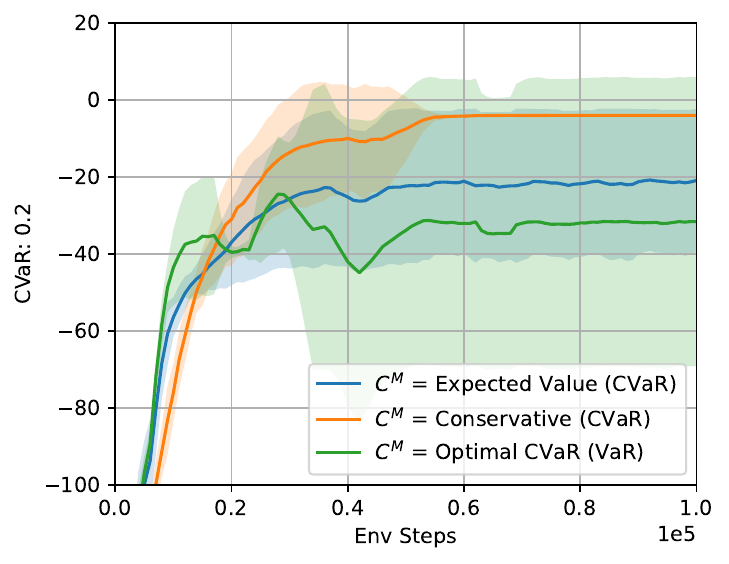}
        \label{fig:Guarded_Maze_Cap_Compare}
    }

    \caption{Autonomous Vehicle (AV) $\text{CVaR}_{\alpha=0.05}$, Continuous Guarded Maze (C-GM) $\text{CVaR}_{\alpha=0.05}$ and Discrete Guarded Maze (D-GM) $\text{CVaR}_{\alpha=0.2}$ performance. Comparing to Baselines and showing effect of different minimum return caps.}
    \label{fig:Experiment_Plots}
\end{figure*}

\begin{figure*}[ht]
    \centering
    \subfigure[Baselines]{
        \includegraphics[width=0.35\textwidth]{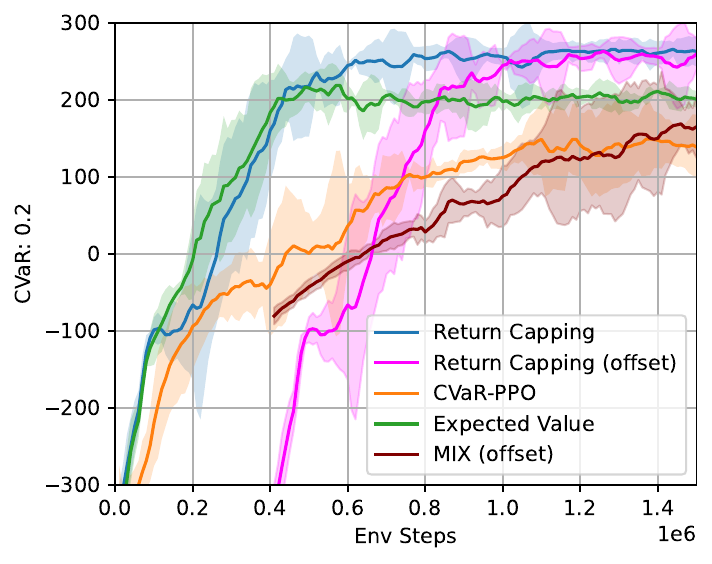}
        \label{fig:Lunar_Lander}
    }
    \subfigure[Minimum Return Cap]{
        \includegraphics[width=0.35\textwidth]{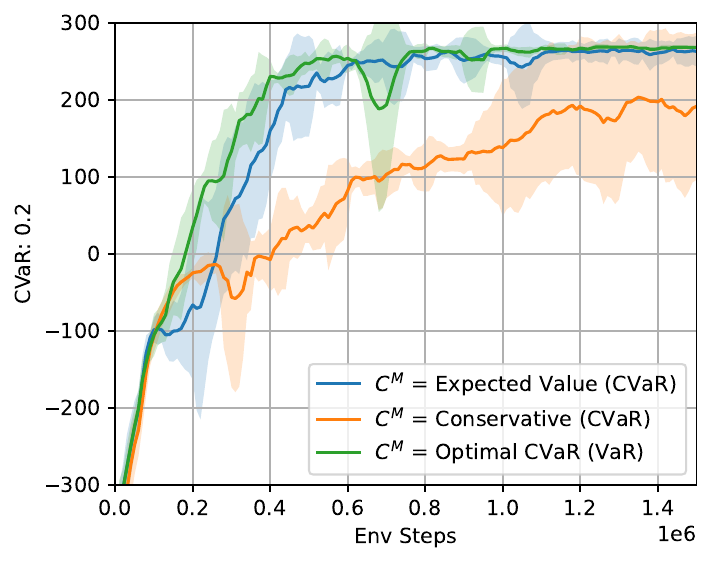}
        \label{fig:Lunar_Lander_Cap_Compare}
    }

    \caption{Lunar Lander $\text{CVaR}_{\alpha=0.2}$ performance. Comparing to Baselines and showing effect of different minimum return caps. We have included MIX~\cite{luo2024mixing} as an additional baseline. The (offset) plots have been offset by the training steps required to learning the risk-neutral policy.}
    \label{fig:Lunar_Lander_Plots}
\end{figure*}

\subsection{Betting Game}
\label{sec:Betting_Game}
The betting game environment~\cite{bauerle2011markov} represents a useful domain to assess agents aiming to optimise CVaR due to the substantial and obvious difference between how the agent should act depending on aiming to optimise either expected value or CVaR. In the environment, the agent begins with 16 tokens and is able to wager a proportion of these tokens with $p(win)=x$. If the agent wins, it will win the number of tokens wagered, but will lose the same amount on a loss. The game is sequential, such that after each bet, the subsequent bet is based on the current total of tokens the agent has. The game continues for a fixed number of steps; in our experimentation, six. For $x>0.5$, the optimal policy to maximise expected is to always bet all available tokens at each step. Such a policy results in high return trajectories infrequently the majority of trajectories will result negative return. A CVaR policy should aim for even the worst performing runs to result in positive return. For our implementation, $x=0.8$.

\subsection{Autonomous Vehicle}
The second environment used is adapted from~\cite{rigter2021bamdp}. An autonomous car must navigate along different types of roads to reach the goal state, represented as a graph shown in Figure \ref{fig:AV_Image}. The cost of traversing a given road is sampled from a distribution that depends on the type of road it is, being either \emph{highway}, \emph{main road}, \emph{street} or \emph{lane}. The exact distributions sampled from are outlined in Appendix~\ref{app:Autonomous_Vehicle}.

\subsection{Guarded Maze}
\label{sec:Guarded_Maze}
Pictured in Figure \ref{fig:Guarded_Maze_Image}, the agent in the Guarded Maze aims to reach the green goal state, receiving a reward for doing so. However, should the agent pass through the red guarded zone, it will receive a cost penalty sampled from a random distribution. We adapt two versions of the Guarded Maze from~\cite{greenberg2022efficient} and~\cite{luo2023alternative, luo2024mixing}, which vary in respect to state space and the guard cost distribution. The guard cost distribution is parameterised such that an optimal risk-neutral policy would ignore the guard and always take the shortest path to the goal, whereas a CVaR optimal policy instead should take the longer path to avoid the guard. We have modified both these environments in order to make them more challenging, with these modifications outlined below.

The Guarded Maze, shown in Figure \ref{fig:Guarded_Maze_Image}, from~\cite{greenberg2022efficient} consists of a \emph{continuous} state space but discrete action space. The agent is penalised for each step it takes in the environment, however in~\cite{greenberg2022efficient}, this penalty is limited to only the first 32 environment steps. This results in the CVaR of a policy that does not reach the goal but also does not ever cross the guard being higher than the CVaR of a policy that optimally takes the shortest path passing through the guard. This reduces the likelihood that a policy optimising CVaR would converge to this shortest path. This is because with this limit on step penalties it is no longer optimal compared to random exploration. In our experimentation, we remove this limit and demonstrate that Return Capping still converges to the CVaR-optimal policy.

A \emph{discrete} state space variant of the Guarded Maze was introduced by~\cite{luo2023alternative}. However, the maze used in~\cite{luo2023alternative} moved the goal such that the difference between the shortest path (9 steps) and the optimal path avoiding the guard (11 steps) was much smaller. This reduces the challenge of policy optimisation since it reduces the likelihood of policies converging to the local optimal of the risk-neutral policy. In our implementation, the goal is positioned as shown in the continuous example shown in Figure \ref{fig:Guarded_Maze_Image}, resulting in a much greater difference between the shortest (6 steps) and guard avoiding (14 steps) paths.

\subsection{Lunar Lander}

We also evaluate on Lunar Lander from OpenAI Gym~\cite{brockman2016gym, towers2024gymnasium}. Whilst the other benchmarks have relatively simple environment dynamics, Lunar Lander represents a more complex physics-based environment where the optimal risk-neutral policy is non-trivial. In order to induce a difference between optimal risk neutral and risk-averse policies, we follow the example of~\cite{luo2023alternative} where the environment is modified so that landing on the right-hand side results in an additional high-variance reward.~\cite{luo2023alternative} uses a zero-mean reward sampled from $\mathcal{N}(0, 1) * 100$. However, a reward such as this does not necessitate a difference between the risk-neutral and CVaR optimal policies, as a policy that landed on the left-hand side of the landing pad could be optimal both for CVaR and expected value of return. To fix this, we modify the environment such that in addition to the zero-mean, high-variance random reward, landing on the right-hand side also results in an additional $70$ reward. 

\section{Results}

Figures \ref{fig:Betting_Game_Plots}, \ref{fig:Experiment_Plots}, and \ref{fig:Lunar_Lander_Plots}, show the performance of Return Capping against the baselines outlined above. We also plot the performance of Return Capping depending on the initialisation of $C^M$, using either the VaR of the optimal CVaR policy, the CVaR of the optimal expected value policy, and either slightly above the CVaR of the conservative 'do nothing' policy or the CVaR of a random policy when the former does not exist.

\textbf{Betting Game:} In the betting game, the optimisation objective was $\text{CVaR}_{\alpha=0.2}$. For comparing to the baselines, we initialised $C^M$ based on the CVaR of the conservative policy of betting nothing at each turn. We see in Figure \ref{fig:Betting_Game} that neither CVaR-PPO or CVaR-PG match the performance of Return Capping. As the probability of winning was sufficiently high, the optimal CVaR policy is not trivially to bet nothing each turn. However, we see that when using CVaR-PG, training converges to this overly conservative policy, resulting in a CVaR of close to zero. From Figure \ref{fig:Betting_Game_Cap_Compare}, we see that performance was affected by $C^M$ initialisation. However, all three initialisations result in better performance than the baselines.

\textbf{Autonomous Vehicle:} In the autonomous vehicle domain, the optimisation objective was $\text{CVaR}_{\alpha=0.05}$. $C^M$ was initialised using a random policy. In this domain, CVaR-PPO performs very similarly to Return Capping, whereas CVaR-PG takes significantly longer to converge. However, of the eight seeds we ran in this domain, two of the CVaR-PPO runs, did not converge to goal-reaching policies. We have chosen to exclude these two seeds from the plot in Figure \ref{app:Autonomous_Vehicle} to present a more representative baseline (see Appendix \ref{app:Autonomous_Vehicle} for the plot without outliers removed). However, the presence of these outliers suggests the Return Capping may be a more robust method compared to CVaR-PPO. Figure \ref{fig:Autonomous_Vehicle_Cap_Compare} shows that that the $C^M$ initialisation had minimal effect on the training performance.

\textbf{Guarded Maze:} In the continuous state space setting, Figure \ref{fig:Guarded_Maze_CeSoR} shows that Return Capping outperforms all baselines. For each method, we ran six seeds. None of CVaR-PG runs converged to a policy that reached the goal, and of the CVaR-PPO seeds, two converged to the CVaR-optimal policy, one converged to the risk-neutral policy, and three did not converge to goal-reaching policies. All six of the Return Capping runs converged to the CVaR-optimal policy, again indicating greater training robustness compared to other CVaR-PG methods. In the discrete state space setting, we see in Figure \ref{fig:Guarded_Maze} that although CVaR-PPO does consistently converge to the optimal policy, Return Capping converges in approximately half as many environment steps. Interestingly, Figures \ref{fig:Guarded_Maze_CeSoR_Cap_Compare} and \ref{fig:Guarded_Maze_Cap_Compare} show the in both guarded maze domains, the final policy is highly sensitive to $C^M$. Unlike in the two previous domains, where setting $C^M = \text{VaR}_\alpha(\pi^*)$ produced the best results, here we see the opposite. This behaviour is likely due to the difference in complexity of the risk-neutral and risk-averse policies. As mentioned in Section \ref{sec:Guarded_Maze}, the optimal risk-averse path is approximately twice as long as the risk-neutral path. When the cap is high, we see this results in the policy converging to the local, less complex risk neutral policy, reducing policy exploration, and resulting in the risk-averse policy never being discovered. However, when the cap is set sufficiently low, the policy is initially agnostic to trajectory length, given the trajectory return is above the cap. This allows for sufficient exploration for the optimal risk-averse policy to be found.

\textbf{Lunar Lander:} In the lunar lander environment, unlike the other environments, initialising the cap minimum using the conservative policy did not result in consistent convergence to the optimal CVaR policy. Figure \ref{fig:Lunar_Lander_Cap_Compare} illustrates the relative performance of the three cap minimums. For comparisons to the baselines, we have used $C^m$ equal to the CVaR of the risk-neutral policy. However, as this policy is non-trivial to determine, we have also included an offset Return Capping plot, accounting for the $4e5$ environment steps required to learn the risk-neutral policy. Figure \ref{fig:Lunar_Lander} shows the performance of Return Capping against two of the baselines. We have not included the CVaR-PG baseline here as it was unable to discover any successful policy. The Lunar Lander environment illustrates the issues present with standard CVaR-PG methods. The CVaR-PPO baseline converges to a very conservative policy that does land on the left but is subsequently unable to improve, which can likely be attributed to the \emph{blindness to success} issue. However, using Return Capping, we are able to train a policy that both lands on the left-hand side but also is able to land without being overly conservative. Additionally, we have included the MIX baseline from~\cite{luo2024mixing}. Given we require the risk-neutral policy to set $C^M$, we included the MIX baseline mixing this policy with a learnt risk-averse policy. However, we see from this baseline that this simply converges to the risk-neutral policy. This environment highlights one of the issues with the MIX baseline, being that if the risk-neutral and risk-averse policies are in direct opposition in some regions, learning the risk-averse policy may be made more difficult by the interference of the risk-neutral policy, and increases the probability of converging to this local optimal. 

\section{Conclusion}

This paper proposes a reformulation of the CVaR optimisation problem by capping the total return of trajectories used it training, rather than simply discarding them. We have shown empirically that this method shows significant improvement to baseline CVaR-PG methods in a number of environments, although there is still space for additional experimentation in a broader range of environments. As we have shown, some environments are sensitive to the minimum cap value, and further environment testing may help to better identify environment characteristics that lead to this.

Further work could be done examining how Return Capping can be combined with other CVaR optimisation methods such as those proposed in~\cite{greenberg2022efficient, lim2022distributional, luo2024mixing}. Although these combination algorithms fell outside the scope of this research, examining whether performance improvements can be made here may be an interesting area to explore for future work.

\section*{Acknowledgements}
This work was supported by the EPSRC Centre for Doctoral Training in Autonomous Intelligent
Machines and Systems [EP/S024050/1]. Lacerda and Hawes have received EPSRC funding via the “From Sensing to Collaboration” programme grant [EP/V000748/1]. 

\section*{Impact Statement}

This paper presents work whose goal is to advance the field of 
Machine Learning. There are many potential societal consequences 
of our work, none which we feel must be specifically highlighted here.

% In the unusual situation where you want a paper to appear in the
% references without citing it in the main text, use \nocite

\bibliography{paper}
\bibliographystyle{icml2025}

%%%%%%%%%%%%%%%%%%%%%%%%%%%%%%%%%%%%%%%%%%%%%%%%%%%%%%%%%%%%%%%%%%%%%%%%%%%%%%%
%%%%%%%%%%%%%%%%%%%%%%%%%%%%%%%%%%%%%%%%%%%%%%%%%%%%%%%%%%%%%%%%%%%%%%%%%%%%%%%
% APPENDIX
%%%%%%%%%%%%%%%%%%%%%%%%%%%%%%%%%%%%%%%%%%%%%%%%%%%%%%%%%%%%%%%%%%%%%%%%%%%%%%%
%%%%%%%%%%%%%%%%%%%%%%%%%%%%%%%%%%%%%%%%%%%%%%%%%%%%%%%%%%%%%%%%%%%%%%%%%%%%%%%
\newpage
\appendix
\onecolumn
\section{CVaR PPO}
\label{app:CVaR_PPO}
Given the Standard PPO loss as defined by
\begin{equation}
    r(\theta) = \frac{\pi_\theta(a | s)}{\pi_{\theta_{\text{old}}}(a | s)},
\end{equation}

\begin{equation}
    L^{\text{CLIP}}(\theta) = \mathbb{E}_{\tau \sim \pi_\theta} \left[ \min(r(\theta) \hat{A}, \text{clip}(r(\theta), 1 - \epsilon, 1 + \epsilon) \hat{A}) \right]
\end{equation}

and the standard CVaR objective

\begin{equation}
J_\alpha (\pi_\theta) = \mathbb{E}_{\tau \sim \pi_\theta} [R(\tau) | R(\tau) \leq \text{VaR}_\alpha(R(\tau)].
\end{equation}

We introduce the CVaR-PPO loss

\begin{equation}
    L_\alpha^{\text{CLIP}}(\theta) = \mathbb{E}_{\tau \sim \pi_\theta} \left[ \min(r(\theta) \hat{A}, \text{clip}(r(\theta), 1 - \epsilon, 1 + \epsilon) \hat{A}) | R(\tau) \leq \text{VaR}_\alpha(R(\tau)\right].
\end{equation}

Effectively, only doing the PPO update using the worst $\alpha$ proportion of trajectories. Similarly, the value function update is also done only on this proportion of trajectories. We show that this baseline works better than the standard CVaR Policy Gradient in some environments.

\newpage
\section{Betting Game}
\label{app:Betting_Game}
\textbf{State Space:} The agent's state is its current number of tokens (does not need to be an integer number), and its current turn.

\textbf{Action Space:} The agent's actions are to bet $\{0\%, 12.5\%, 25\%, 37.5\%, 50\%, 62.5\%, 75\%, 87.5\% 100\%\}$ of its current tokens.

\textbf{Reward:} The agent receives reward after each turn based on the total number of tokens gained/lost.

\textbf{Environment Dynamics:} Episode terminates after 6 bets or losing all tokens

\begin{table}[t]
\caption{Betting Game Hyperparameters}
\label{tab:Betting_Game_Hyperparameters}
\vskip 0.15in
\begin{center}
\begin{small}
\begin{sc}
\begin{tabular}{lccccr}
\toprule
Paramter  & Expected Value PPO & Return Capping  & CVaR-PPO & CVaR-PG  \\
\midrule
CVaR $\alpha$  & 1 & 0.2 & 0.2 & 0.2 \\
Num Updates & 200 \\
Num Env Steps per Update & 5000\\
Epochs per Batch &  5\\
Sub Batch Size & 50 & & & 1000\\
$\gamma$ & 0.99\\
lr & 1e-3\\
\textbf{PPO} & \\
clip $\epsilon$ & 0.2\\
GAE $\lambda$ & 0.95\\
entropy$ \epsilon$ & 1e-5\\
\textbf{Return Capping} \\
Cap $\eta$ & & 0.2\\
Optimal CVaR (VaR) Minimum Cap & & 16\\
Expected Value (CVaR) Minimum Cap & & -16\\
Conservative (CVaR) Minimum Cap & & 0\\
\bottomrule
\end{tabular}
\end{sc}
\end{small}
\end{center}
\vskip -0.1in
\end{table}

\newpage
\section{Autonomous Vehicle}
\label{app:Autonomous_Vehicle}
\textbf{State Space:} Discrete x and y position of agent

\textbf{Action Space:} Move $\{Up, Down,Left,Right\}$

\textbf{Reward:}
Table \ref{tab:AV_Cost_Table} outlines the distribution of costs in the Autonomous Vehicle environment. When the agent traverses a road, the cost is sampled from the distribution outlined, with 0.4, 0.3, 0.3 probability of receiving the Small, Medium or Large cost respectively.
Reward of 80 for reaching the goal.

\textbf{Environment Dynamics:} Max number of env steps is 32

\begin{table}[t]
\caption{Autonomous Vehicle Road Time Cost Distributions}
\label{tab:AV_Cost_Table}
\vskip 0.15in
\begin{center}
\begin{small}
\begin{sc}
\begin{tabular}{lcccr}
\toprule
Road Type  & Small Cost & Medium Cost  & Large Cost \\
\midrule
Lane  & 7 & 7 & 8 \\
Street & 4 & 5 & 11 \\
Main Road & 2 & 4 & 13 \\
Highway & 1 & 2 & 18 \\
\bottomrule
\end{tabular}
\end{sc}
\end{small}
\end{center}
\vskip -0.1in
\end{table}

\begin{table}[t]
\caption{Autonomous Vehicle Hyperparameters}
\label{tab:Autonomous Vehicle}
\vskip 0.15in
\begin{center}
\begin{small}
\begin{sc}
\begin{tabular}{lccccr}
\toprule
Paramter  & Expected Value PPO & Return Capping  & CVaR-PPO & CVaR-PG  \\
\midrule
CVaR $\alpha$  & 1 & 0.05 & 0.05 & 0.05 \\
Num Updates & 400 & & 200 & 66\\
Num Env Steps per Update & 1000 & & 2000 & 6000\\
Epochs per Batch &  1\\
Sub Batch Size & 50 & & & 300\\
$\gamma$ & 0.99\\
lr & 1e-3\\
\textbf{PPO} & \\
clip $\epsilon$ & 0.2\\
GAE $\lambda$ & 0.95\\
entropy$ \epsilon$ & 1e-5\\
\textbf{Return Capping} \\
Cap $\eta$ & & 0.6\\
Optimal CVaR (VaR) Minimum Cap & & 27\\
Expected Value (CVaR) Minimum Cap & & 6\\
Random (CVaR) Minimum Cap & & -256\\
\bottomrule
\end{tabular}
\end{sc}
\end{small}
\end{center}
\vskip -0.1in
\end{table}

Figure \ref{fig:Autonomous_Vehicle} illustrates the performance in the Autonomous Vehicle without the outliers removed. These plots show the 95\% confidence interval. Due to the 2 outlier runs, the mean performance of CVaR-PPO is substantially worse than the other methods.

\begin{figure}[ht]
\vskip 0.2in
\begin{center}
\centerline{\includegraphics[width=0.5\columnwidth]{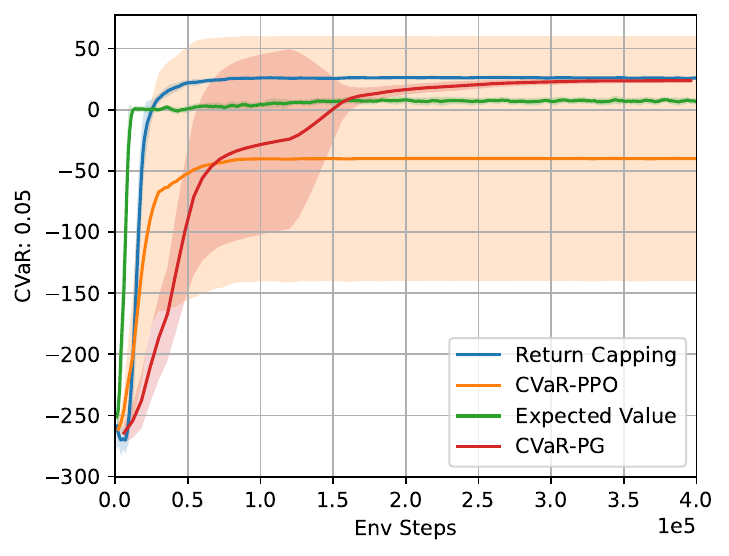}}
\caption{Autonomous Vehicle with outliers $\text{CVaR}_{\alpha=0.05}$ performance}
\label{fig:Autonomous_Vehicle}
\end{center}
\vskip -0.2in
\end{figure}

\newpage
\section{Guarded Maze (Continuous)}
\label{app:Guarded_Maze_CeSoR}

\textbf{State Space:} Continuous x and y position of agent

\textbf{Action Space:} Move $\{Up, Down,Left,Right\}$

\textbf{Reward:} -1 for each step in the environment. Reward of 16 for reaching the target zone. Guard is present with 20\%. If the Guard is present, cost is sampled from an exponential distribution with a mean of 32.

\textbf{Environment Dynamics:} Max number of env steps is 161. The movement actions are all affected by added noise.

\begin{table}[t]
\caption{Guarded Maze (Continuous) Hyperparameters}
\label{tab:Guarded Maze CeSoR}
\vskip 0.15in
\begin{center}
\begin{small}
\begin{sc}
\begin{tabular}{lccccr}
\toprule
Paramter  & Expected Value PPO & Return Capping  & CVaR-PPO & CVaR-PG  \\
\midrule
CVaR $\alpha$  & 1 & 0.05 & 0.05 & 0.05 \\
Num Updates & 100 & & 200 & 66\\
Num Env Steps per Update & 10000 & & & \\
Epochs per Batch &  6\\
Sub Batch Size & 50 & & & 500\\
$\gamma$ & 0.99\\
lr & 1e-3\\
\textbf{PPO} & \\
clip $\epsilon$ & 0.2\\
GAE $\lambda$ & 0.95\\
entropy$ \epsilon$ & 1e-5\\
\textbf{Return Capping} \\
Cap $\eta$ & & 0.2\\
Optimal CVaR (VaR) Minimum Cap & & -3.7\\
Expected Value (CVaR) Minimum Cap & & -54\\
Conservative (CVaR) Minimum Cap & & -151\\
\bottomrule
\end{tabular}
\end{sc}
\end{small}
\end{center}
\vskip -0.1in
\end{table}

\newpage
\section{Guarded Maze (Discrete)}
\label{app:Guarded_Maze}

\textbf{State Space:} One-hot encoding of discrete state

\textbf{Action Space:} Move $\{Up, Down,Left,Right\}$

\textbf{Reward:} -1 for each step in the environment. Reward of 10 for reaching the target zone. Guard cost is sampled from a normal distribution $\mathcal{N}(0, 1) * 30$.

\textbf{Environment Dynamics:} Max number of env steps is 100. 

\begin{table}[t]
\caption{Guarded Maze (Discrete) Hyperparameters}
\label{tab:Guarded Maze}
\vskip 0.15in
\begin{center}
\begin{small}
\begin{sc}
\begin{tabular}{lccccr}
\toprule
Paramter  & Expected Value PPO & Return Capping  & CVaR-PPO & CVaR-PG  \\
\midrule
CVaR $\alpha$  & 1 & 0.2 & 0.2 & 0.2 \\
Num Updates & 100 & & 40 & 40\\
Num Env Steps per Update & 1000 & & 5000& \\
Epochs per Batch &  6\\
Sub Batch Size & 50 & & & 1000\\
$\gamma$ & 0.99\\
lr & 1e-3\\
\textbf{PPO} & \\
clip $\epsilon$ & 0.2\\
GAE $\lambda$ & 0.95\\
entropy$ \epsilon$ & 1e-5\\
\textbf{Return Capping} \\
Cap $\eta$ & & 0.2\\
Optimal CVaR (VaR) Minimum Cap & & -4\\
Expected Value (CVaR) Minimum Cap & & -40\\
Conservative (CVaR) Minimum Cap & & -90\\
\bottomrule
\end{tabular}
\end{sc}
\end{small}
\end{center}
\vskip -0.1in
\end{table}

\newpage
\section{Lunar Lander}

\textbf{State Space, Action Space, Environment Dynamics:} As presented in OpenAI gym box2D~\cite{brockman2016gym}.

\textbf{Reward:} As presented in OpenAI gym box2D, but with additional $70 + \mathcal{N}(0, 1) * 100$ reward for landing on right size. The left and right regions are shown in Figure \ref{fig:Lunar_Lander_Image}

\begin{table}[t]
\caption{Lunar Lander Hyperparameters}
\label{tab:Lunar Lander}
\vskip 0.15in
\begin{center}
\begin{small}
\begin{sc}
\begin{tabular}{lccccr}
\toprule
Paramter  & Expected Value PPO & Return Capping  & CVaR-PPO & MIX  \\
\midrule
CVaR $\alpha$  & 1 & 0.2 & 0.2 & 0.2 \\
Num Updates & 150 & & & \\
Num Env Steps per Update & 10000 & & & \\
Epochs per Batch &  1\\
Sub Batch Size & 50 & & & 2000\\
$\gamma$ & 0.99\\
lr & 1e-3\\
\textbf{PPO} & \\
clip $\epsilon$ & 0.2\\
GAE $\lambda$ & 0.95\\
entropy$ \epsilon$ & 1e-5\\
\textbf{Return Capping} \\
Cap $\eta$ & & 0.8\\
Optimal CVaR (VaR) Minimum Cap & & 270\\
Expected Value (CVaR) Minimum Cap & & 200\\
Conservative (CVaR) Minimum Cap & & -128\\
\bottomrule
\end{tabular}
\end{sc}
\end{small}
\end{center}
\vskip -0.1in
\end{table}

\begin{figure}[ht]
\vskip 0.2in
\begin{center}
\centerline{\includegraphics[width=0.5\columnwidth]{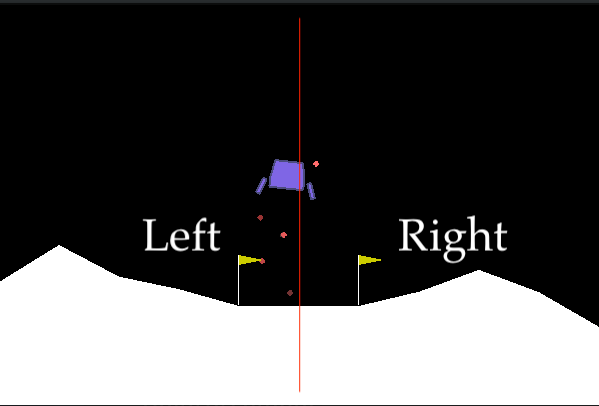}}
\caption{Lunar Lander environment from~\cite{luo2023alternative} showing divide between left and right landing zones}
\label{fig:Lunar_Lander_Image}
\end{center}
\vskip -0.2in
\end{figure}

%%%%%%%%%%%%%%%%%%%%%%%%%%%%%%%%%%%%%%%%%%%%%%%%%%%%%%%%%%%%%%%%%%%%%%%%%%%%%%%
%%%%%%%%%%%%%%%%%%%%%%%%%%%%%%%%%%%%%%%%%%%%%%%%%%%%%%%%%%%%%%%%%%%%%%%%%%%%%%%

\end{document}